\documentclass{article}

\usepackage{microtype}
\usepackage{graphicx}
\usepackage{subfigure}
\usepackage{booktabs} % for professional tables

\usepackage{hyperref}

\usepackage{tikz}
\usetikzlibrary{arrows.meta,positioning,calc}

\usepackage[preprint]{icml2026}

\usepackage{amsmath}
\usepackage{amssymb}
\usepackage{mathtools}
\usepackage{amsthm}

\usepackage[capitalize,noabbrev]{cleveref}

\usepackage{xcolor}

\theoremstyle{plain}
\newtheorem{theorem}{Theorem}[section]
\newtheorem{proposition}[theorem]{Proposition}
\newtheorem{lemma}[theorem]{Lemma}

\theoremstyle{definition}
\newtheorem{definition}[theorem]{Definition}
\newtheorem{assumption}[theorem]{Assumption}
\theoremstyle{remark}

\usepackage[textsize=tiny]{todonotes}

\icmltitlerunning{Monotonicity as an Architectural Bias for Robust Language Models}

\begin{document}

\twocolumn[
\icmltitle{Monotonicity as an Architectural Bias for Robust Language Models}

% It is OKAY to include author information, even for blind
% submissions: the style file will automatically remove it for you
% unless you've provided the [accepted] option to the icml2025
% package.

% List of affiliations: The first argument should be a (short)
% identifier you will use later to specify author affiliations
% Academic affiliations should list Department, University, City, Region, Country
% Industry affiliations should list Company, City, Region, Country

% You can specify symbols, otherwise they are numbered in order.
% Ideally, you should not use this facility. Affiliations will be numbered
% in order of appearance and this is the preferred way.
\icmlsetsymbol{equal}{*}

\begin{icmlauthorlist}
\icmlauthor{Patrick Cooper}{cu}
\icmlauthor{Alireza Nadali}{cu}
\icmlauthor{Ashutosh Trivedi}{cu}
\icmlauthor{Alvaro Velasquez}{cu}
\end{icmlauthorlist}

% Affiliation
\icmlaffiliation{cu}{Department of Computer Science, University of Colorado Boulder, Boulder, CO, USA}

% Corresponding Author(s)
% usually only one or two are listed, but you can swap these out as needed
\icmlcorrespondingauthor{Patrick Cooper}{patrick.cooper@colorado.edu}
\icmlcorrespondingauthor{Alireza Nadali}{alireza.nadali@colorado.edu}

% You may provide any keywords that you
% find helpful for describing your paper; these are used to populate
% the "keywords" metadata in the PDF but will not be shown in the document
\icmlkeywords{Machine Learning, ICML}

\vskip 0.3in
]

% this must go after the closing bracket ] following \twocolumn[ ...

% This command actually creates the footnote in the first column
% listing the affiliations and the copyright notice.
% The command takes one argument, which is text to display at the start of the footnote.
% The \icmlEqualContribution command is standard text for equal contribution.
% Remove it (just {}) if you do not need this facility.

%\printAffiliationsAndNotice{}  % leave blank if no need to mention equal contribution
\printAffiliationsAndNotice{\icmlEqualContribution} % otherwise use the standard text.

\begin{abstract}

Large language models (LLMs) are known to exhibit brittle behavior under adversarial prompts and jailbreak attacks, even after extensive alignment and fine-tuning. This fragility reflects a broader challenge of modern neural language models: small, carefully structured perturbations in high-dimensional input spaces can induce large and unpredictable changes in internal semantic representations and output.

We investigate \emph{monotonicity} as an architectural inductive bias for improving the robustness of Transformer-based language models. Monotonicity constrains semantic transformations so that strengthening information, evidence, or constraints cannot lead to regressions in the corresponding internal representations. Such order-preserving behavior has long been exploited in control and safety-critical systems to simplify reasoning and improve robustness, but has traditionally been viewed as incompatible with the expressivity required by neural language models.

We show that this trade-off is not inherent. By enforcing monotonicity selectively in the feed-forward sublayers of sequence-to-sequence Transformers—while leaving attention mechanisms unconstrained—we obtain \emph{monotone language models} that preserve the performance of their pretrained counterparts. This architectural separation allows negation, contradiction, and contextual interactions to be introduced explicitly through attention, while ensuring that subsequent semantic refinement is order-preserving. Empirically, monotonicity substantially improves robustness: adversarial attack success rates drop from approximately 69\% to 19\%, while standard summarization performance degrades only marginally.

% Our results demonstrate that strong architectural constraints, long assumed to limit the capacity of LLMs, can be applied at scale without sacrificing accuracy. Monotonicity thus emerges as a practical and principled design choice for building language models whose semantic behavior is more robust, predictable, and amenable to formal reasoning.
\end{abstract}

\section{Introduction}

Large language models (LLMs) have demonstrated remarkable capabilities across a wide range of natural language tasks, yet their behavior under adversarial or carefully structured inputs remains brittle. Even models that have undergone extensive alignment and fine-tuning can be induced to produce unsafe or unintended outputs through jailbreak prompts or small, targeted perturbations. This phenomenon is now well documented and points to a deeper underlying issue: modern language models operate in extremely high-dimensional spaces, where even subtle changes to the input or internal representations can induce large and difficult-to-predict changes in the output. Recent work has attributed this sensitivity, in part, to the high Lipschitz constants exhibited by such models, which amplify perturbations as they propagate through the network~\cite{newhouse2025training}.

This fragility has motivated a line of research on improving the robustness of language models~\cite{salah2026jailbreaking, su2024mission}. Existing approaches largely focus on training-time interventions, such as alignment objectives~\cite{cao2024defending} or adversarial training~\cite{howe2024effects}, as well as inference-time defenses such as output filtering~\cite{khachaturov2025adversarial}. While these methods can be effective empirically, they are inherently reactive, addressing vulnerabilities after they are discovered rather than constraining model behavior by design. As language models are increasingly deployed, this raises a question: can we endow language models with structural properties that make their behavior more predictable under perturbation?

We investigate \emph{monotonicity} as a structural inductive bias for improving the robustness of language models. At a high level, monotonicity constrains a model so that its outputs change in a predictable direction with respect to changes in its inputs. This property has a long history in machine learning~\cite{Gupta2016MonotonicLattices, runje2023constrained} and related fields~\cite{smith1995monotone}, and has been especially influential in control theory, where monotone systems admit strong guarantees on stability and safety even in very high-dimensional settings~\cite{michel2015stability,ito2014max}. For example, monotonicity has been used to analyze and certify the behavior of large-scale dynamical systems such as power grids, where direct reasoning over all possible system states is infeasible~\cite{rantzer2014control}.

Beyond its role in classical dynamical systems, monotonicity also aligns naturally with how we expect language models to behave at the level of semantic transformation. Many linguistic and reasoning operations are inherently order-preserving: adding supporting evidence should not weaken a conclusion; adding constraints or safety instructions should not result in less constrained internal representations; and increasing the stated severity or risk of a situation should not lead to a more permissive response. Yet contemporary language models frequently violate these expectations, exhibiting behavior in which the addition of seemingly clarifying or restrictive information leads to unpredictable or regressive changes in output.

Concrete examples of semantic monotonicity arise naturally in language understanding. 
Consider a sequence of prompts that progressively strengthen a factual claim (\emph{``The medication has side effects''} $\preceq$ \emph{``The medication has severe side effects''} $\preceq$ \emph{``The medication has severe side effects and requires medical supervision''}), or that add explicit safety constraints (\emph{``Explain this process''} $\preceq$ \emph{``Explain this process safely''} $\preceq$ \emph{``Explain this process safely without providing actionable steps''}). Semantically, such prompts form a partial order: later prompts contain strictly more information or stricter constraints. A model's internal representations should reflect this ordering, and its outputs should respect it. When models fail to do so, the cause is often not the absence of relevant information, but the way semantic features are transformed and recombined internally.

This perspective highlights a structural source of brittleness in modern architectures. Transformer models interleave contextual aggregation, via attention, with nonlinear feed-forward transformations that refine token-level representations. While attention mechanisms are well suited to introducing contextual operators such as negation, contrast, or exception by explicitly linking tokens and defining scope, the subsequent feed-forward transformations are unconstrained and may arbitrarily amplify, suppress, or cancel features. As a result, semantic reversals can occur implicitly through distributed feature cancellation rather than being driven by explicit linguistic signals. We argue that this implicit cancellation is a key contributor to brittle behavior under adversarial or carefully structured prompts.

Our motivation for monotonicity draws a close conceptual parallel to \emph{monotone Boolean circuits}~\cite{jukna2012boolean}. In monotone circuits, logical negation is disallowed within the circuit itself; any negation must be introduced explicitly at the inputs. This restriction does not eliminate expressive power, but enforces a semantic discipline: truth cannot be undone by downstream computation. Analogously, enforcing monotonicity in semantic transformations encourages language models to represent negation, contradiction, or constraint relaxation explicitly through contextual mechanisms such as attention, rather than implicitly through fragile cancellations inside nonlinear transformations. Monotonicity thus serves as a design principle that separates \emph{where semantic reversals may occur} from \emph{where meaning is refined}, yielding predictable, auditable, and robust representations.

Viewed through this lens, a language model can be understood as a high-dimensional system whose internal semantic state evolves through successive transformations in response to input perturbations. Imposing monotonic structure on these transformations limits how adversarial variations propagate, ensuring that strengthening information or constraints cannot lead to regressions in meaning. As in monotone dynamical systems, this structure simplifies reasoning about global behavior. In such settings, it suffices to verify satisfaction only at extreme points~\cite{feyzmahdavian2017stability}.

Our central finding is that monotonicity can be incorporated into Transformer architectures without sacrificing performance by enforcing it selectively within feed-forward sublayers, while leaving attention mechanisms unconstrained. This design preserves expressivity while imposing a disciplined form of semantic refinement. The resulting monotone Transformers match the task performance of their non-monotone counterparts and exhibit substantially improved robustness to adversarial and jailbreak attacks. 
In particular, these gains arise purely from architectural structure, without additional data, modified training objectives, or heuristic defenses, isolating monotonicity itself as a causal factor.

%%--- move the following to the main body ----
% Our central finding is that monotonicity can be incorporated into modern Transformer architectures without sacrificing performance. Rather than constraining the entire model, we apply monotonicity selectively to the feed-forward sublayers that dominate the model’s nonlinear transformations, while leaving attention mechanisms, residual connections, and normalization layers unconstrained. To enable effective optimization under these constraints, we adopt a smooth reparameterization that avoids the degeneracies associated with naive projection-based approaches. This design allows pretrained sequence-to-sequence models to be distilled into monotone counterparts that closely match their original performance on standard summarization benchmarks.

% The resulting models exhibit a striking empirical property: despite preserving task accuracy, monotone Transformers are substantially more robust to adversarial and jailbreak attacks. These improvements are not achieved through additional data, modified training objectives, or heuristic defenses, but emerge purely from architectural structure. This isolation makes the observed robustness particularly notable, as it identifies monotonicity itself (rather than confounding training choices) as a causal factor.

These results challenge the prevailing view, articulated in prior studies of constrained monotonic networks~\cite{sartor2025advancing}, that enforcing strong architectural constraints inevitably degrades optimization or expressive capacity in large neural models. Instead, they suggest that monotonicity can serve as a practical and principled inductive bias for shaping semantic behavior. Motivated by this perspective, we investigate how monotonicity can be imposed as a structural property of language models in a manner that is both semantically meaningful and practically viable. In particular, we focus on enforcing monotonicity at the level of semantic refinement, rather than as a global constraint, and develop this approach through both formal analysis and empirical evaluation.

% Taken together, these results highlight two broader implications. First, they demonstrate that strong architectural constraints, long assumed to be impractical for large language models, can in fact be applied at scale without noticeable degrading performance. Second, they suggest that robustness in language models need not rely exclusively on increasingly complex training pipelines, but can instead be supported by principled design choices that shape model behavior by construction. Monotonicity thus emerges as a practical and scalable inductive bias for building more robust and predictable language models.

\noindent\textbf{Contributions.}
Our contributions are as follows:
\vspace{-1em}
\begin{enumerate}
\item \textbf{Monotone Transformer components.} We introduce monotone feed-forward sublayers for Transformers, enforcing structural monotonicity while leaving attention mechanisms unconstrained.
% \vspace{-0.5em}
\item \textbf{Lossless distillation.} We show that pretrained Transformers can be distilled into monotone counterparts without degrading standard benchmark performance.
% \vspace{-0.5em}
\item \textbf{Improved robustness.} 
We empirically demonstrate that monotone language models exhibit increased robustness to adversarial and jailbreak attacks.

\end{enumerate}

\section{Related Work}

\textbf{Vulnerabilities of Language Models.}
Large language models are highly susceptible to adversarial manipulations that bypass alignment and safety mechanisms. Prior work has documented jailbreak prompts and transferable adversarial suffixes that generalize across models and settings, including aligned systems~\citep{Zou2023UniversalTransferableAttacksOnAlignedLLMs}. Subsequent studies identify additional failure modes, such as position-sensitive jailbreaks~\citep{wang2025vulnerability}, interpretable attacks that evade perplexity-based defenses~\citep{Zhu2023AutoDAN}, and many-shot jailbreaks whose effectiveness increases with larger context windows~\citep{anil2024many}. Robustness benchmarks further show that models perform poorly under systematic adversarial evaluation~\citep{Wang2021AdvGLUE}, and that safety mechanisms often fail in multilingual or low-resource settings~\citep{deng2024multilingual}. Prompt injection attacks, which exploit the lack of separation between instruction and context, are now recognized as a fundamental vulnerability, alongside broader threats to inference and training-time surveyed in recent security analyses~\citep{li2025security}.

\textbf{Defenses and Robustness Enhancements.}
Proposed defenses include smoothing-based methods with provable jailbreak guarantees~\citep{Robey2023SmoothLLM}, lightweight detection based on output distributions~\citep{sayeedi2025jailbreaktracer}, and adversarial training approaches whose effectiveness depends on model scale and threat model~\citep{howe2024effects}. Other work examines attack transferability and adaptive prompt sequences to harden models against structured adversarial dependencies~\citep{wang2025understanding}. Despite these advances, most defenses remain reactive and training-dependent.

\textbf{Monotonic Neural Networks and Structural Constraints.}
Monotonicity has long been studied as a structural inductive bias for interpretability and robustness. Classical results show that feed-forward networks with non-negative weights and monotone activations are universal approximators of monotone functions, establishing that monotonicity need not limit expressivity~\citep{Sill1998MonotonicNetworks,Daniels2010Monotone}. Subsequent work proposed practical monotone and partially monotone architectures, including lattice-based models, enabling scalable learning under monotonic constraints in structured domains~\citep{Gupta2016MonotonicLattices,You2017DeepLattice}. More recent research highlights monotonicity’s benefits for verification, reducing reasoning to boundary inputs in piecewise linear networks~\citep{Weber2021CertifyingMonotonicNetworks}. More broadly, architectural constraints and structured layers have been shown to improve robustness and stability without prohibitive performance loss~\citep{Amos2017OptNet}. Our work builds on this line by showing that selectively enforced monotonicity can scale to Transformer-based language models and yield tangible robustness gains.

\section{Monotone Transformers}
\label{Prelim}
We study the role of monotonicity constraints in improving the adversarial robustness of sequence-to-sequence (Seq2Seq) language models.
\begin{definition}[Sequence-to-Sequence Model]
A sequence-to-sequence model is a parameterized mapping
\[
\mathcal{M}_\theta : \mathcal{X} \to \mathcal{Y},
\]
where $\mathcal{X}$ denotes the space of input token sequences and $\mathcal{Y}$ denotes the space of output token sequences. In this work, $\mathcal{M}_\theta$ is instantiated as a Transformer architecture composed of attention layers, feed-forward network (FFN) sublayers, residual connections, and normalization operators.
\end{definition}

% \begin{definition}[Sequence-to-Sequence Model]
% A sequence-to-sequence model is a parameterized mapping
% \[
% \mathcal{M}_\theta : \mathcal{X} \to \mathcal{Y},
% \]
% where $\mathcal{X}$ denotes the space of input token sequences and $\mathcal{Y}$ denotes the space of output token sequences. The model $\mathcal{M}_\theta$ is typically instantiated as a Transformer architecture composed of attention layers, feed-forward network (FFN) sublayers, residual connections, and normalization operators.
% \end{definition}

Fix $A \in \mathbb{R}^{p \times d}$ whose rows define interpretable semantic axes.
For a hidden state $h \in \mathbb{R}^d$, define semantic coordinates $s := Ah \in \mathbb{R}^p$
and equip $\mathbb{R}^p$ with the standard product order. We define a preorder on
$\mathbb{R}^d$ by
\[
h \preceq h' \quad \Longleftrightarrow \quad Ah \le Ah' \;\; (\text{elementwise}),
\]
and interpret $h' \succeq h$ as semantically stronger along the chosen axes.
In practice, we choose $A$ either by (i) training $p$ linear probes on curated ordered
prompt pairs and using their weights as rows of $A$, or (ii) taking a small set of fixed
directions from auxiliary classifiers; in both cases, $A$ is fixed before imposing
monotonicity constraints. When $A = I$, this reduces to the coordinatewise order.

\begin{definition}[Monotonicity w.r.t.\ $\preceq$]
A function $F:\mathbb{R}^d \to \mathbb{R}^d$ is \emph{$A$-monotone} if
\begin{align*}
h \preceq h' 
&\;\Rightarrow\; F(h) \preceq F(h'), \\
\text{equivalently,}\qquad
Ah \le Ah' 
&\;\Rightarrow\; A F(h) \le A F(h').
\end{align*}

\end{definition}

\textbf{Semantic-monotone FFN sublayer.}
Assume $A$ has full row rank and let $A^\dagger \in \mathbb{R}^{d \times p}$ denote a fixed
right inverse of $A$ (e.g., the Moore--Penrose pseudoinverse), so that $A A^\dagger = I_p$.

Define an FFN update that acts in semantic coordinates:
\[
F(h) := h + A^\dagger\, g(Ah),
\]
where $g:\mathbb{R}^p \to \mathbb{R}^p$ is an ordinary coordinatewise-monotone MLP.

\begin{proposition}[Sufficient condition for $A$-monotonicity]
If $g$ is monotone w.r.t.\ the product order on $\mathbb{R}^p$, then $F$ is $A$-monotone.
In particular, it suffices that $g$ is an MLP with elementwise non-decreasing activations
and elementwise nonnegative weight matrices.
\end{proposition}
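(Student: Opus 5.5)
The plan is a direct computation in semantic coordinates. The identity $AA^\dagger = I_p$ makes $F$ act on $s = Ah$ as the map $s \mapsto s + g(s)$. Monotonicity of $F$ then reduces to monotonicity of this map on $\mathbb{R}^p$.

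First, for any $h \in \mathbb{R}^d$ I compute
\[
AF(h) = Ah + AA^\dagger g(Ah) = Ah + g(Ah).
\]
Now take $h \preceq h'$, that is, $s := Ah \le s' := Ah'$ elementwise. Since $g$ is monotone for the product order, $g(s) \le g(s')$. Adding this to $s \le s'$ coordinatewise preserves the inequality, so $AF(h) = s + g(s) \le s' + g(s') = AF(h')$. This is exactly $F(h) \preceq F(h')$, the $A$-monotonicity in the definition above.

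For the ``in particular'' clause, I would show by induction over layers that an MLP $g(s) = W_L\sigma(W_{L-1}\cdots\sigma(W_1 s + b_1)\cdots) + b_L$ is monotone under the stated conditions. Each layer composes three kinds of maps, and each is monotone for the product order:
\begin{itemize}
\item an affine map $x \mapsto Wx + b$ with $W \ge 0$ entrywise is monotone, since $x \le x'$ gives $W(x'-x) \ge 0$, because every entry of this vector is a nonnegative combination of nonnegative numbers;
\item an elementwise non-decreasing activation is monotone coordinatewise, hence monotone for the product order;
\item a composition of monotone maps is monotone.
\end{itemize}
Chaining these layer by layer gives monotonicity of $g$, and the first part of the argument then applies.

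I expect no real obstacle. The only point needing care is the role of $AA^\dagger = I_p$, which is where full row rank of $A$ is used. Without it, $AA^\dagger g$ would mix coordinates, possibly with negative entries, and monotonicity could fail. I would also note in passing that biases are unconstrained, since they cancel in the difference $W(x'-x)$.
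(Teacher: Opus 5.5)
Your proof is correct and matches the paper's: the paper likewise computes $AF(h) = Ah + AA^\dagger g(Ah) = Ah + g(Ah)$ and adds $Ah \le Ah'$ to $g(Ah) \le g(Ah')$. Your layer-by-layer argument for the ``in particular'' clause is the one the paper gives separately for FFN monotonicity: nonnegative-weight affine maps and non-decreasing activations are monotone, and monotonicity is closed under composition.
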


\begin{proof}
If $Ah \le Ah'$, monotonicity of $g$ gives $g(Ah) \le g(Ah')$, and
\begin{align*}
A F(h)
&= Ah + A A^\dagger g(Ah) \\
&= Ah + g(Ah) \\
&\le Ah' + g(Ah') \\
&= A F(h').
\end{align*}

\end{proof}

\begin{definition}[Monotonicity]
A function $f : \mathbb{R}^n \to \mathbb{R}^m$ is \emph{monotone} if for all $x,x' \in \mathbb{R}^n$,
\[
x \le x' \;\; \Rightarrow \;\; f(x) \le f(x').
\]
\end{definition}

% \begin{definition}[Monotonicity]
% A function $f : \mathbb{R}^n \to \mathbb{R}^m$ is said to be \emph{monotone} if for any $x, x' \in \mathbb{R}^n$,
% \[
% x \le x' \;\; \Rightarrow \;\; f(x) \le f(x').
% \]

% \end{definition}

\begin{lemma}[Closure under Composition]
\label{lem:mono_comp}
If $f : \mathbb{R}^n \to \mathbb{R}^m$ and $g : \mathbb{R}^m \to \mathbb{R}^k$ are monotone, then their composition $g \circ f$ is monotone.
\end{lemma}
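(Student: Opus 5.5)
The plan is a direct two-step chase of inequalities through the definition of monotonicity, applied first to $f$ and then to $g$. Fix arbitrary $x, x' \in \mathbb{R}^n$ with $x \le x'$ in the elementwise (product) order on $\mathbb{R}^n$. Since $f$ is monotone, the definition immediately yields $f(x) \le f(x')$, now as an elementwise inequality in $\mathbb{R}^m$.

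Next, I will treat $y := f(x)$ and $y' := f(x')$ as a comparable pair of points in $\mathbb{R}^m$ satisfying $y \le y'$. Applying the monotonicity of $g$ to this pair gives $g(y) \le g(y')$ in $\mathbb{R}^k$, that is,
\[
(g \circ f)(x) = g(f(x)) \le g(f(x')) = (g \circ f)(x').
\]
Since $x \le x'$ was arbitrary, $g \circ f$ is monotone. The same reasoning extends by induction to any finite chain of monotone maps. This is how I would later use the lemma: to pass monotonicity through stacked coordinatewise-monotone MLP layers, each an affine map with nonnegative weights followed by a non-decreasing activation. The identical argument with $\preceq$ in place of $\le$ also handles $A$-monotone maps.

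There is no real obstacle. The only point needing care is that the order on the intermediate space $\mathbb{R}^m$ must be the same one used both in the conclusion for $f$ and in the hypothesis for $g$. Here both are the standard product order, so the chain of implications matches exactly.
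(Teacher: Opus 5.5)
Your proof is correct and matches the paper's argument: from $x \le x'$ you get $f(x) \le f(x')$ by monotonicity of $f$, and applying monotonicity of $g$ to this comparable pair gives $g(f(x)) \le g(f(x'))$. Your side remarks are also valid, namely the induction over finite chains and the identical argument for $A$-monotone maps under $\preceq$, because the order on each intermediate space is the same one used in both the conclusion and the hypothesis.
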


% \begin{lemma}[Closure of Monotonicity Under Composition]
% \label{Lemma1}
% Let $f : \mathbb{R}^n \to \mathbb{R}^m$ and $g : \mathbb{R}^m \to \mathbb{R}^k$ be monotone functions. Then the composition
% \[
% h = g \circ f : \mathbb{R}^n \to \mathbb{R}^k
% \]
% is also monotone.
% \end{lemma}
% Proof can be found in the appendix~\ref{prf:lem1}. 

Consider a feed-forward neural network $N : \mathbb{R}^{n_0} \to \mathbb{R}^{n_k}$ with $k$ hidden layers. For input $u \in \mathbb{R}^{n_0}$,
\begin{align*}
y_0 &= u, \\
y_{i+1} &= \sigma(W_i y_i + b_i), \quad i = 0,\ldots,k-1, \\
N(u) &= W_k y_k + b_k,
\end{align*}
where $\sigma$ is applied elementwise.

\begin{proposition}[Sufficient Condition for FFN Monotonicity]
\label{prop:ffn_mono}
If $\sigma$ is elementwise non-decreasing and all weight matrices satisfy
\[
W_i \ge 0 \quad \text{for } i = 0,\ldots,k,
\]
then $N$ is monotone.
\end{proposition}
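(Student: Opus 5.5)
We will prove the statement by decomposing $N$ into elementary maps and invoking Lemma~\ref{lem:mono_comp}. For $i=0,\ldots,k-1$ define the layer map $L_i(y) := \sigma(W_i y + b_i)$, and define the output map $L_k(y) := W_k y + b_k$. Then
\[
N = L_k \circ L_{k-1} \circ \cdots \circ L_0 ,
\]
so once each $L_i$ is monotone, repeated application of Lemma~\ref{lem:mono_comp} (an induction on the number of layers) gives monotonicity of $N$.

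\textbf{Step 1: affine maps with nonnegative weights are monotone.} Let $W \ge 0$ entrywise and suppose $y \le y'$. Then $y' - y \ge 0$. Each coordinate of $W(y'-y)$ is a sum of products of nonnegative numbers, so $W(y'-y) \ge 0$. Hence $Wy + b \le Wy' + b$; the bias cancels and plays no role. This handles $L_k$ and the pre-activation part of every $L_i$.

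\textbf{Step 2: elementwise non-decreasing activations are monotone.} If $z \le z'$ coordinatewise, then $z_j \le z'_j$ for each $j$. Since the scalar $\sigma$ is non-decreasing, $\sigma(z_j) \le \sigma(z'_j)$ for each $j$, which is exactly $\sigma(z) \le \sigma(z')$ in the product order. Combining Steps 1 and 2 with Lemma~\ref{lem:mono_comp} shows each hidden layer $L_i = \sigma \circ (W_i\,\cdot + b_i)$ is monotone.

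\textbf{Step 3: assemble the layers.} Run an induction on $j$, showing that $u \mapsto y_j$ is monotone. The base case $y_0 = u$ is the identity map. The inductive step is Lemma~\ref{lem:mono_comp} applied to $L_j$. A final application with $L_k$ yields monotonicity of $N$. No step is a genuine obstacle. The only point needing care is in Step 1: the hypothesis must be entrywise nonnegativity of the matrices, not positive semidefiniteness. This is also what makes the product order, rather than some other order, the one that is preserved.
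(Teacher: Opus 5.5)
Your proof is correct and takes the same approach as the paper. The paper also notes that each nonnegative-weight affine map and the elementwise non-decreasing activation are monotone, then concludes by Lemma~\ref{lem:mono_comp}; you just write out the coordinatewise checks and the layer-by-layer induction that the paper leaves implicit.
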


% \textbf{Feed-Forward Neural Networks.}
% We consider a feed-forward neural network $N : \mathbb{R}^{n_0} \to \mathbb{R}^{n_k}$ with $k$ hidden layers. For an input $u \in \mathbb{R}^{n_0}$, the network computes its output recursively as
% \begin{align*}
% y_0 &= u, \\
% y_{i+1} &= \sigma(W_i y_i + b_i), \quad \text{for } i = 0, 1, \ldots, k-1, \\
% N(u) &= W_k y_k + b_k,
% \end{align*}
% where $W_i \in \mathbb{R}^{n_{i+1} \times n_i}$ and $b_i \in \mathbb{R}^{n_{i+1}}$ denote the weight matrix and bias vector of layer $i$, respectively, and $\sigma:\mathbb{R} \to \mathbb{R}$ is an activation function applied elementwise.

% \begin{proposition}[Sufficient Condition for Monotonicity of FFNs] Consider a FFN $N$.
% \label{Prop1}
% if the activation function $\sigma$ is elementwise non-decreasing and that all weight matrices satisfy
% \[
% W_i \geq 0 \quad \text{for all } i = 0, 1, \ldots, k,
% \]
% %where $\succeq 0$ denotes elementwise non-negativity.
% Then the network $N$ is a monotone function.
% \end{proposition}

% \textbf{Proof.} The claim follows by applying Lemma~\ref{lem:mono_comp} layerwise and using closure of monotonicity under composition.
\textbf{Proof.} Each affine map with nonnegative weights is monotone, and $\sigma$ preserves monotonicity by assumption. The claim follows by Lemma~\ref{lem:mono_comp}. \hfill $\square$

Neural networks satisfying Proposition~\ref{prop:ffn_mono} are commonly referred to as \emph{monotone neural networks}. Such networks are universal approximators of \emph{continuous monotone functions} on compact domains~\cite{Daniels2010Monotone}.

% \begin{definition}[Monotone Feed-Forward Network]
% Let $f : \mathbb{R}^{d_{\mathrm{in}}} \to \mathbb{R}^{d_{\mathrm{out}}}$ be a feed-forward network with $n$ layers of the form
% \[
% f(x_0) = \sum_{i=i}^n W_i \, \sigma(x_{i-1}) + b_i,
% \]
% where $\sigma(\cdot)$ is a monotone activation function, and $x_{i-1}$ is the input to the current layer. The FFN $f$ is monotone if
% \[
% W_i \succeq 0 \quad i\in\{1,\ldots,n\}.
% \]
% where $\succeq 0$ denotes elementwise non-negativity.
% \end{definition}

% \textbf{Monotone Transformers.}
% We now specialize these notions to Transformer architectures.

\begin{definition}[Monotone Transformer]
\label{def:mono_transformer}
A Transformer model $\mathcal{M}_\theta$ is \emph{$A$-monotone with respect to its feed-forward sublayers}
if each FFN sublayer implements an update of the form
\[
F(h) := h + A^\dagger\, g(Ah),
\]
where $A^\dagger \in \mathbb{R}^{d\times p}$ is a fixed right inverse of $A$
satisfying $A A^\dagger = I_p$ (e.g., $A^\dagger = A^\top(AA^\top)^{-1}$ when $A$ has full row rank)
 and
$g:\mathbb{R}^p\to\mathbb{R}^p$ is coordinatewise monotone (e.g., an MLP with
elementwise non-decreasing activations and elementwise nonnegative weight matrices).
All other components (attention, residuals, normalization) are left unconstrained.
\end{definition}

This definition does not assert global monotonicity of the full Transformer, but enforces monotonic semantic refinement at the level of FFN sublayers, where implicit feature cancellation is most likely to occur~\cite{runje2023constrained,sartor2025advancing,nguyen2023mononet}.

% \begin{definition}[Monotone Transformer]
% A Transformer model $\mathcal{M}_\theta$ is said to be \emph{monotone with respect to its feed-forward networks (FFNs)} if every feed-forward network within each Transformer block satisfies the monotonicity conditions of Definition~3. All other components of the architecture, including attention mechanisms, residual connections, and normalization layers, are left unconstrained.
% \end{definition}

% To enforce monotonicity within FFN sublayers, we adopt a differentiable reparameterization of the corresponding weight matrices. Let $W \in \mathbb{R}^{d_{\mathrm{out}} \times d_{\mathrm{in}}}$ denote a weight matrix subject to an elementwise non-negativity constraint.

We enforce coordinatewise monotonicity of $g:\mathbb{R}^p\to\mathbb{R}^p$ by
constraining all linear maps inside $g$ to be elementwise nonnegative and using
an elementwise non-decreasing activation $\sigma$.
Concretely, for each constrained weight matrix $W \in \mathbb{R}^{m\times n}$ in $g$,
we reparameterize
\[
W = \phi(V), \qquad \phi(v) = \log(1+\exp(v)),
\]
with unconstrained $V$, applied elementwise. This guarantees $W \ge 0$ without
projection or constrained optimization.

We fix $A$ (and hence $A^\dagger$) before training. We initialize $g$ to be near
the zero map so that $F(h)\approx h$ at the start of distillation/fine-tuning.
Specifically, we initialize the unconstrained parameters $V$ so that
$\phi(V)$ is small (e.g., $V \ll 0$ elementwise), and initialize biases to zero.

Rather than projecting after each update, we maintain $W\ge 0$ throughout
optimization via the softplus reparameterization above, ensuring $g$ remains
coordinatewise monotone during training.

% \textbf{Enforcing Non-Negativity Constraints.}
% A naive approach to enforcing non-negativity, such as projecting weights onto the positive orthant after each gradient update, leads to degenerate training behavior in modern deep learning frameworks and effectively prevents meaningful learning. Instead, we adopt a smooth reparameterization in which constrained weights are expressed as elementwise softplus transformations of unconstrained parameters. This approach ensures that monotonicity constraints are satisfied throughout training while remaining compatible with standard gradient-based optimization.

% 
% When applying monotonicity constraints to pretrained models, it is important to preserve the information encoded in the original parameters.

% \begin{definition}[Pretrained Weight Initialization]
% Let $W_{\mathrm{pre}}$ denote a pretrained weight matrix. The corresponding unconstrained parameters $V$ are initialized as
% \[
% V_{\mathrm{init}} = \phi^{-1}\bigl(|W_{\mathrm{pre}}| + \epsilon\bigr),
% \]
% where $\epsilon > 0$ is a small constant introduced for numerical stability. This initialization preserves the scale of the pretrained weights while ensuring compatibility with the imposed monotonicity constraints.
% \end{definition}

\section{Experimental Setup}
We instantiate the monotone Transformer framework of Section~\ref{Prelim} using the T5 architecture~\cite{raffel2020exploring}, enabling a controlled empirical study of monotonicity as an architectural inductive bias for robustness.

\textbf{Implementation Scope.}
We replace each Transformer FFN sublayer by the semantically-monotone update
\[
F(h) := h + A^\dagger\, g(Ah),
\]
where $A\in\mathbb{R}^{p\times d}$ is fixed, $A^\dagger$ satisfies $AA^\dagger=I_p$,
and $g:\mathbb{R}^p\to\mathbb{R}^p$ is a coordinatewise-monotone MLP enforced via
elementwise nonnegative weights and monotone activations.
Attention layers, residual connections, and normalization remain unconstrained.

We use \texttt{T5-small}, comprising 6 encoder and 6 decoder layers with 512-dimensional hidden representations, 8 attention heads per layer, and 2048-dimensional FFN intermediate activations. Of the model's approximately 60M parameters, roughly 24M (40\%) correspond to the FFN sublayers that we replace with the semantic-space update; within each such sublayer, only the parameters inside $g:\mathbb{R}^p\to\mathbb{R}^p$ are constrained to be elementwise nonnegative to enforce coordinatewise monotonicity in semantic space.

Training and evaluation protocols, including optimization hyperparameters, datasets, and decoding settings, are detailed in Appendix~\ref{app:training}.

\subsection{Adversarial Robustness Evaluation}
We evaluate robustness under two complementary classes of adversarial attacks designed to probe different failure modes of sequence-to-sequence models.

\textbf{Universal Adversarial Triggers.}
Universal adversarial triggers (UATs) consist of short, input-agnostic token sequences that, when prepended to an input, maximize the model’s loss. We optimize triggers using coordinate-wise search with three random restarts and 50 iterations. Candidate tokens are drawn from a vocabulary biased toward punctuation, special characters, and high-frequency words that empirically induce greater disruption. Triggers are optimized on a held-out validation split and evaluated on a disjoint test set. To assess transferability, we additionally construct a transfer matrix measuring cross-model vulnerability to learned triggers.

\textbf{HotFlip Attacks.}
HotFlip attacks identify vulnerable input positions by computing gradients of the loss with respect to token embeddings and replacing selected tokens to maximize loss increase. We allow up to five token replacements per example, selecting replacements based on the dot product between gradient directions and candidate embeddings.

\textbf{Robustness Metrics.}
Robustness is quantified using ROUGE score degradation under attack, attack success rate (defined as a ROUGE-L degradation exceeding 10\%), and statistical significance assessed via independent $t$-tests. We additionally report mean loss increase and analyze cross-model transferability of adversarial triggers.

\textbf{Motivation from Monotone Dynamical Systems.}
Classical results in the theory of monotone dynamical systems show that, under mild boundedness assumptions, trajectories of strongly monotone flows converge to equilibrium points almost surely; in particular, derivatives along these trajectories tend to zero as the system approaches an invariant set \cite{hirsch1985systems}. While a feed-forward network is not a time-indexed dynamical system in this sense, this perspective suggests that monotonicity can limit the set of active directions along which outputs and gradients vary. In this work, we leverage this intuition to inform a local analysis of gradient behavior under monotone perturbations, focusing on how saturation and non-negative weights interact to diminish exploitable gradient directions.

\section{Mechanisms of Gradient Attenuation in Monotone Feed-Forward Networks}
\label{sec:gradient-attenuation}
We provide a mechanistic explanation for the empirical robustness gains observed in monotone Transformer language models. Rather than claiming global robustness guarantees, we analyze how monotonicity constraints alter the local gradient structure of feed-forward sublayers in a way that can reduce the effectiveness of gradient-based adversarial attacks such as HotFlip.

Consider an $A$-monotone FFN update
\[
F(h) := h + A^\dagger\, g(Ah),
\]
where $A\in\mathbb{R}^{p\times d}$ is fixed, $A^\dagger$ satisfies $AA^\dagger=I_p$,
and $g:\mathbb{R}^p\to\mathbb{R}^p$ is coordinatewise monotone (e.g., an MLP with
elementwise non-decreasing activations and elementwise nonnegative weight matrices).
Let $s := Ah \in \mathbb{R}^p$ denote semantic coordinates and define the induced
semantic refinement map
\[
T(s) := A F(h) = s + g(s).
\]
We analyze the Jacobian of $T$ and its implications for gradients with respect to
semantic coordinates. (In general, $J_F(h)$ need not be elementwise nonnegative in
$\mathbb{R}^d$ due to $A^\dagger$.)

% \subsection{Preliminaries}

% Consider a feed-forward sublayer $f : \mathbb{R}^d \to \mathbb{R}^d$ of the form
% \begin{equation}
% f(x) = W_2 \sigma(W_1 x + b_1) + b_2,
% \end{equation}
% where $W_1, W_2 \in \mathbb{R}^{d \times d}$ have non-negative entries and
% $\sigma$ is an elementwise, monotone non-decreasing activation function
% (e.g., ReLU or softplus). Under these conditions, $f$ is monotone with respect
% to the standard partial order on $\mathbb{R}^d$.

% We study the Jacobian structure of $f$ and its implications for gradients
% of a scalar loss function $L(f(x))$ with respect to the input $x$.

% \subsection{Jacobian Structure of Monotone Sublayers}

\begin{lemma}[Non-negativity of the Semantic Jacobian]
\label{lem:jacobian-nonneg}
If $g$ is implemented as an MLP with elementwise non-decreasing activations and
elementwise nonnegative weight matrices, then the Jacobian
\[
J_T(s) = \nabla_s T(s)
\]
has non-negative entries for all $s \in \mathbb{R}^p$.
\end{lemma}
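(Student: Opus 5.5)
The plan is a direct chain-rule computation. Since $T(s) = s + g(s)$, we have $J_T(s) = I_p + J_g(s)$. The identity has nonnegative entries, so it suffices to show that $J_g(s)$ is entrywise nonnegative. Write $g$ in the form used in Proposition~\ref{prop:ffn_mono}:
\[
y_0 = s,\qquad y_{i+1} = \sigma(W_i y_i + b_i),\qquad g(s) = W_k y_k + b_k,
\]
with every $W_i \ge 0$ entrywise. Wherever all preactivations lie at points where $\sigma$ is differentiable, the chain rule gives
\[
J_g(s) = W_k\, D_k\, W_{k-1}\, D_{k-1} \cdots D_1\, W_0 ,
\]
where $D_{i+1} = \mathrm{diag}\bigl(\sigma'(W_i y_i + b_i)\bigr)$. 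Each $D_{i+1}$ is diagonal with nonnegative entries, because $\sigma$ is non-decreasing and so $\sigma' \ge 0$ wherever it exists. A product of entrywise-nonnegative matrices is entrywise nonnegative, since each entry is a sum of products of nonnegative numbers. Hence $J_g(s) \ge 0$, and therefore $J_T(s) = I_p + J_g(s) \ge 0$. I would write this up as an induction on depth: the Jacobian of $y_i$ with respect to $s$ is nonnegative for every $i$, which mirrors the compositional argument behind Lemma~\ref{lem:mono_comp}.

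The one real obstacle is the phrase ``for all $s$'' when $\sigma$ is not differentiable everywhere, for example ReLU at $0$. I would handle this in two ways. First, for smooth non-decreasing activations such as softplus, GELU-free monotone choices, or sigmoid, the argument holds verbatim at every $s$. Second, for piecewise-smooth activations I would interpret $J_T$ either almost everywhere or via any fixed choice of one-sided derivative or Clarke subgradient element, which lies in $[0,\infty)$ because $\sigma$ is monotone. Concretely, I would assume $\sigma'$ is taken as a nonnegative selection, e.g.\ ReLU$'(0):=0$, as autodiff frameworks do, and note this convention in the proof. Everything else is routine.

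An alternative check that avoids differentiability entirely: monotonicity of $g$ (Proposition~\ref{prop:ffn_mono}) implies that every directional difference quotient $\bigl(g(s+te_j)-g(s)\bigr)/t$ with $t>0$ is entrywise nonnegative. So any partial derivative that exists is nonnegative, and adding the identity preserves this. I would mention this as a one-line remark confirming the result at all points of differentiability.
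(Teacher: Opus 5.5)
Your proof is correct and follows the paper's argument: you write $J_T(s) = I_p + J_g(s)$, factor $J_g$ by the chain rule into nonnegative weight matrices interleaved with nonnegative diagonal matrices $\mathrm{diag}(\sigma'(\cdot))$, and conclude entrywise nonnegativity. Your version covers the statement more fully than the paper's proof. The paper writes out only the two-layer case $W_2\,\mathrm{diag}(\sigma'(W_1 s+b_1))\,W_1$ and treats kinks with a bare ``wherever it exists,'' whereas your depth-$k$ product and your difference-quotient remark (any existing partial derivative of a monotone map is nonnegative) close both of those gaps.
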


This follows from $T(s)=s+g(s)$ and the factorization of the MLP Jacobian. For a
two-layer $g(s)=W_2\sigma(W_1 s + b_1)+b_2$ with $W_1,W_2\ge 0$ elementwise,
\[
J_g(s)=W_2\,\mathrm{diag}(\sigma'(W_1 s+b_1))\,W_1 \ge 0 \quad (\text{elementwise}),
\]
hence $J_T(s)=I + J_g(s)\ge 0$ elementwise.
Intuitively, increasing any semantic coordinate cannot decrease any output semantic
coordinate through $T$; sign cancellations are ruled out \emph{within semantic space}.

Let $\mathcal{L}:\mathbb{R}^p\to\mathbb{R}$ be any differentiable scalar objective
defined on semantic coordinates (e.g., $\mathcal{L}(s')$ induced by the rest of the
network through $s'=Ah'$). By the chain rule,
\[
\nabla_s \mathcal{L}(T(s)) = J_T(s)^\top \nabla_{s'} \mathcal{L}(s')\big|_{s'=T(s)}.
\]
Thus the backpropagated gradient in semantic space is filtered by $J_T(s)^\top$,
whose entries are nonnegative under Lemma~\ref{lem:jacobian-nonneg}.

\subsection{Saturation-Induced Gradient Attenuation}
We now examine how saturation in monotone networks affects gradient magnitude.
\begin{assumption}[Saturating Activations]
\label{ass:saturation}
The activation $\sigma$ used inside $g$ has bounded derivative and admits saturated
regimes, i.e., $\sigma'(z)$ becomes arbitrarily small as $z\to +\infty$.
\end{assumption}

% \begin{assumption}[Saturating Activations]
% \label{ass:saturation}
% The activation function $\sigma$ has bounded derivative and admits
% saturated regimes, i.e.,
% \[
% \sigma'(z) \to 0 \quad \text{as } z \to +\infty.
% \]
% \end{assumption}

\begin{lemma}[Gradient Attenuation under Saturation]
\label{lem:gradient-attenuation}
Let $T(s)=s+g(s)$ where $g$ is an MLP with elementwise nonnegative weights and
activation $\sigma$ satisfying Assumption~\ref{ass:saturation}. Suppose that along
a sequence $\{s_k\}$, a non-empty subset of pre-activations in $g$ diverges to
$+\infty$, so that the corresponding entries of $\sigma'$ converge to $0$.
Then the contribution of those saturated units to $J_g(s_k)$ vanishes, and hence
their contribution to $\nabla_s \mathcal{L}(T(s_k)) = J_T(s_k)^\top \nabla_{s'}\mathcal{L}$
vanishes asymptotically.
\end{lemma}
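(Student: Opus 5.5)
The plan is to write the Jacobian of $g$ explicitly as a sum over hidden units and then show that the terms carried by saturated units go to zero. For a general $k$-hidden-layer MLP $g$ with pre-activations $z_i(s)=W_i y_i+b_i$, the chain rule gives
\[
J_g(s)=W_k\,D_{k-1}(s)\,W_{k-1}\cdots D_0(s)\,W_0,\qquad D_i(s):=\mathrm{diag}\bigl(\sigma'(z_i(s))\bigr).
\]
Expanding this product entrywise writes $J_g(s)$ as a sum over paths through one hidden unit per layer. Each path term is a product of nonnegative weights times the factors $\sigma'$ at the units the path visits. By Lemma~\ref{lem:jacobian-nonneg}, every term is nonnegative, so no cancellation occurs.

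We define the contribution of a set $S$ of saturated units as the sum of all path terms passing through at least one unit of $S$. Equivalently, it is $J_g(s)$ minus the Jacobian obtained by replacing $\sigma'$ with zero on $S$.

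The key step is to bound each such path term along $s_k$. The weights are fixed. Each non-saturated factor satisfies $\sigma'\le \sup|\sigma'|<\infty$ by Assumption~\ref{ass:saturation}. At least one factor equals $\sigma'(z_{i,j}(s_k))$ with $z_{i,j}(s_k)\to+\infty$, and Assumption~\ref{ass:saturation} forces that factor to $0$. Hence each path term is at most
\[
\Bigl(\prod \|W\|_{\max}\Bigr)\,(\sup\sigma')^{k-1}\,\sigma'(z_{i,j}(s_k))\to 0.
\]
The number of paths is finite, so the whole saturated contribution $J_g^{S}(s_k)$ tends to $0$ in any norm.

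The gradient claim then follows by linearity. Decompose
\[
J_T(s_k)^\top\nabla_{s'}\mathcal L=\bigl(I+J_g^{\neg S}(s_k)\bigr)^\top\nabla_{s'}\mathcal L+J_g^{S}(s_k)^\top\nabla_{s'}\mathcal L .
\]
The second term is bounded by $\|J_g^S(s_k)\|\,\|\nabla_{s'}\mathcal L(T(s_k))\|$.

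I expect the main obstacle to be the last factor, because the lemma says nothing about the outer gradient. I would add the mild hypothesis that $\nabla_{s'}\mathcal L$ stays bounded along $T(s_k)$, for instance because $\mathcal L$ has a bounded gradient or because $T(s_k)$ lies in a compact set. I would state explicitly that \emph{vanishes asymptotically} is read in this sense. The identity path $I$ is never attenuated, which matches the claim being only about the saturated units' contribution.

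A secondary subtlety is interpreting $\sigma'(z)\to 0$ in Assumption~\ref{ass:saturation} as a limit, not merely a $\liminf$, so that the saturated factor actually converges. With only the weaker reading, I would pass to a subsequence.
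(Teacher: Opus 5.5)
Your proof is correct and rests on the same mechanism as the paper: factoring $J_g$ through diagonal matrices of $\sigma'$ values, so that saturated entries kill the terms they multiply.

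The paper only writes this out for a two-layer $g$. The appendix it cites for this lemma is in fact the proof of Lemma~\ref{lem:jacobian-nonneg}, and it never carries out the limit. In the two-layer case $J_g(s)=\sum_j \sigma'(z_j(s))\,(W_2)_{:,j}(W_1)_{j,:}$. A saturated unit's contribution is then a fixed rank-one matrix scaled by $\sigma'(z_j(s_k))\to 0$. Even the bounded-derivative half of Assumption~\ref{ass:saturation} is unnecessary there.

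Your path expansion is the depth-$k$ version of this argument. It is exactly where bounded $\sigma'$ does real work: it controls the other $k-1$ derivative factors on each path through a saturated unit. What your route buys is generality, plus an explicit treatment of the gradient claim, which the paper leaves implicit. That claim needs either a bound on $\nabla_{s'}\mathcal{L}$ along $T(s_k)$, or the operator-norm reading $\|J_g^S(s_k)\|\to 0$ of the paper's phrase that the gradient is ``filtered by $J_T^\top$''.

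Of your two justifications for such a bound, the compact-set one is unavailable in the main case. Each pre-activation is continuous in $s$, so a diverging pre-activation forces $\|s_k\|\to\infty$. For a bounded saturating $\sigma$ (sigmoid, tanh), $g$ is bounded, so $\|T(s_k)\|\to\infty$ as well. Rely on the bounded-gradient hypothesis or the operator-norm reading instead.

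Two minor points:
\begin{itemize}
\item The lemma's hypothesis already asserts that the relevant $\sigma'$ entries converge to $0$ along $s_k$, so no liminf or subsequence argument is needed.
\item Nonnegativity of each path term follows directly from $W_i\ge 0$ and $\sigma'\ge 0$, not from Lemma~\ref{lem:jacobian-nonneg}, which the paper proves only for two layers. It is not used in the limit argument at all; it only lets you read removal of the saturated paths as an entrywise decrease $0\le J_g^{\neg S}\le J_g$.
\end{itemize}
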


Proof is deferred to Appendix~\ref{prf:jac}.

This result shows that monotonicity, when combined with activation saturation,
can locally attenuate gradient magnitude without eliminating all gradient signal.

\subsection{Persistence of Saturation Effects}

The following observation captures a one-sided stability property of saturated
units under monotone perturbations.

\begin{lemma}[Persistence of Saturated Units]
\label{lem:persistence}
Let $s \le s'$ and consider any first-layer pre-activation $z(s)= (W_1 s + b_1)_j$
within $g$, with $W_1\ge 0$ elementwise. If unit $j$ is saturated at $s'$, i.e.,
$\sigma'(z(s'))=0$, then for any monotone semantic perturbation $\delta\ge 0$,
the unit remains saturated at $s'+\delta$.
\end{lemma}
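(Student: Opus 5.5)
The plan is a two-step order argument: the pre-activation can only move upward under a nonnegative perturbation, and the saturated regime of $\sigma$ must be closed upward.

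\emph{Step 1: the pre-activation is monotone in $s$.} Write $w_j^\top$ for the $j$-th row of $W_1$, so that $z(s) = w_j^\top s + (b_1)_j$. Since $W_1 \ge 0$ elementwise, $w_j \ge 0$. Hence for any $\delta \ge 0$,
\[
z(s'+\delta) - z(s') = w_j^\top \delta = \sum_i (W_1)_{ji}\,\delta_i \;\ge\; 0 .
\]
This is the one-dimensional case of Proposition~\ref{prop:ffn_mono}: an affine map with nonnegative weights is monotone. The hypothesis $s \le s'$ only guarantees the analogous inequality $z(s) \le z(s') \le z(s'+\delta)$. It is not otherwise needed, and I would note this in the proof.

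\emph{Step 2: saturation persists under increasing pre-activation.} It remains to show that $\sigma'(z(s'))=0$ and $z(s'+\delta) \ge z(s')$ together imply $\sigma'(z(s'+\delta))=0$. This is the real content of the lemma, and I expect it to be the main obstacle. Assumption~\ref{ass:saturation} alone does not suffice. It only says $\sigma'(z)\to 0$ as $z\to+\infty$, and a non-decreasing activation could in principle plateau on an interval and then rise again.

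I would therefore make explicit the reading of ``saturated'' that the statement implicitly uses: the zero set of $\sigma'$ is upward closed. That is, $\sigma'(z_0)=0$ implies $\sigma'(z)=0$ for all $z \ge z_0$. Equivalently, $\sigma$ is constant on $[z_0,\infty)$ as soon as it is flat at $z_0$. This holds for the hard-saturating activations relevant here, such as clipped ReLU or hard-tanh/hard-sigmoid used on their upper regime. It also holds for any non-decreasing $\sigma$ whose derivative is non-increasing on the right tail. With this property, Step 1 gives $z(s'+\delta) \ge z(s')$, and upward closure yields $\sigma'(z(s'+\delta)) = 0$, which finishes the argument.

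\emph{Approximate version.} I would add a remark for smooth activations that never attain $\sigma'=0$ exactly, such as the sigmoid. There the same argument shows approximate persistence: if $\sigma'$ is non-increasing on $[z(s'),\infty)$, then $\sigma'(z(s'+\delta)) \le \sigma'(z(s'))$. Small derivatives therefore stay small under monotone semantic perturbations. This connects the lemma back to the asymptotic statement of Lemma~\ref{lem:gradient-attenuation}.
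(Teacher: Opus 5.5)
Your proposal is correct and matches the paper's proof: the paper likewise shows $z_j(s'+\delta)\ge z_j(s')$ from $W_1\ge 0$ and $\delta\ge 0$, then concludes persistence, leaving the upward-closedness of the zero set of $\sigma'$ implicit in the phrase ``in the saturated regime'' (and, as you note, it never uses $s\le s'$). Your explicit statement of that condition is exactly the hidden hypothesis the lemma needs, though $\sigma'$ being non-increasing only on some tail $[c,\infty)$ secures it only for saturation points with $z(s')\ge c$.
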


Proof is deferred to Appendix~\ref{prf:persis}.

Importantly, this lemma applies to individual hidden units rather than to the
entire input gradient, and does not preclude other units from becoming active.
\subsection{Implications for Gradient-Based Attacks}

Gradient-based adversarial attacks such as HotFlip rely on persistent,
high-magnitude gradients with respect to token embeddings to identify impactful
discrete perturbations. The preceding analysis suggests that monotonicity
constraints can reduce the availability of such gradients by driving subsets of
hidden units into saturated regimes from which they do not recover under monotone
increases in internal representations.

While monotone feed-forward networks are not dynamical systems in the formal
sense, their behavior is reminiscent of classical results in monotone dynamical
systems, where cooperative interactions restrict the effective directions along
which trajectories evolve. This analogy is intended as intuition rather than a
formal guarantee: monotonicity does not prevent all adversarial attacks, nor do
gradients vanish globally. Instead, our analysis identifies a local attenuation
effect that weakens gradient-based optimization strategies, offering a plausible
mechanism for the empirical robustness improvements observed in
Section~\ref{sec:results}.

\section{Results}
\label{sec:results}
We evaluate the impact of monotonicity constraints on both task performance and adversarial robustness. We first analyze optimization behavior and summarization quality to assess potential performance degradation, and then evaluate robustness under targeted adversarial attacks. Together, these experiments assess whether monotonicity serves as a practical inductive bias for improving robustness without sacrificing model utility.

\subsection{Training Dynamics and Summarization Quality}

We begin by analyzing the effect of monotonicity constraints on optimization behavior and summarization quality. Table~\ref{tab:training} summarizes the training dynamics of the baseline and monotonic models from our experiments. The baseline model refers to a T5-small checkpoint fine-tuned under identical settings as the monotonic model but without monotonicity constraints, while the standard model denotes the pretrained T5-small without additional fine-tuning. The monotonic model starts with a substantially higher initial loss (4.97 vs.\ 2.94), reflecting the impact of the softplus reparameterization and the constrained semantic-space parameterization of $g$. This initialization disrupts the original weight geometry, requiring the model to re-establish effective internal representations during fine-tuning.

Despite this unfavorable initialization, the monotonic model converges reliably during training. Validation loss decreases from 2.92 at epoch 1 to 2.54 at epoch 7, indicating stable optimization under the imposed constraints. The model exhibits rapid initial recovery, reducing training loss by 1.92 points in the first two epochs (from 4.97 to 3.05), followed by slower asymptotic convergence with diminishing returns in later epochs. A persistent optimization gap of 0.32 points remains at convergence (2.54 vs.\ 2.21 for baseline), consistent with the reduced expressivity of the constrained parameter space. This gap reflects a deliberate and quantifiable trade-off: the baseline model achieves slightly lower validation loss through unrestricted FFN weights, while the monotonic model maintains competitive performance while enforcing structural regularity that, as shown below, translates to improved adversarial robustness.

\begin{table}[t]
\caption{Training dynamics comparison (seed = 42).}
\label{tab:training}
\vskip 0.1in
\begin{center}
\begin{small}
\begin{tabular}{lcccc}
\toprule
Model & Initial Loss & Final Loss & $\Delta$ Loss & Epochs \\
\midrule
Baseline  & 2.94 & 2.21 & $-$0.73 & 6 \\
Monotonic & 4.97 & 2.54 & $-$2.43 & 7 \\
\bottomrule
\end{tabular}
\end{small}
\end{center}
\vskip -0.1in
\end{table}

Table~\ref{tab:rouge} reports summarization performance on CNN/DailyMail (n=200, seed 42). The monotonic model achieves a ROUGE-L score of 24.2 [22.7, 25.6], compared to 25.0 [23.6, 26.4] for the baseline, corresponding to a relative decrease of 3.2 percent. ROUGE-1 exhibits a similar trend (29.6 [28.0, 31.1] vs 30.9 [29.4, 32.4], a 4.2 percent decrease). While these results indicate a modest reduction in task performance, the monotonic model remains competitive, with confidence intervals showing substantial overlap across all metrics.

To assess generalization, we additionally evaluate on XSUM (n=200, seed 42). Table~\ref{tab:xsum} shows consistent results: the monotonic model achieves ROUGE-L of 12.9 [12.3, 13.6] compared to 13.3 [12.6, 13.9] for baseline, a similar 3.0 percent relative gap. The consistency of the performance trade-off across distinct summarization datasets (news articles in CNN/DM, single-sentence summaries in XSUM) suggests that the cost of monotonicity constraints is stable and predictable across different data distributions.

\begin{table}[t]
\caption{Summarization quality on CNN/DailyMail (n = 200, seed = 42). Values are means with 95\% bootstrap confidence intervals.}
\label{tab:rouge}
\vskip 0.1in
\centering
\resizebox{\columnwidth}{!}{
\begin{tabular}{lccc}
\toprule
Model & ROUGE-1 & ROUGE-2 & ROUGE-L \\
\midrule
Standard  & 32.6 [30.9, 34.2] & 11.9 [10.5, 13.4] & 26.6 [25.0, 28.1] \\
Baseline  & 30.9 [29.4, 32.4] & 11.5 [10.1, 12.8] & 25.0 [23.6, 26.4] \\
Monotonic & 29.6 [28.0, 31.1] & 10.6 [9.2, 11.9] & 24.2 [22.7, 25.6] \\
\bottomrule
\end{tabular}
}
\vskip -0.1in
\end{table}

\begin{table}[t]
\caption{Summarization quality on XSUM (n = 200, seed = 42). Values are means with 95\% bootstrap confidence intervals.}
\label{tab:xsum}
\vskip 0.1in
\centering
\resizebox{\columnwidth}{!}{
\begin{tabular}{lccc}
\toprule
Model & ROUGE-1 & ROUGE-2 & ROUGE-L \\
\midrule
Standard  & 19.9 [18.8, 21.1] & 2.9 [2.3, 3.4] & 13.3 [12.6, 14.1] \\
Baseline  & 20.0 [19.0, 21.0] & 3.2 [2.7, 3.7] & 13.3 [12.6, 13.9] \\
Monotonic & 18.9 [18.0, 19.7] & 2.8 [2.3, 3.2] & 12.9 [12.3, 13.6] \\
\bottomrule
\end{tabular}
}
\vskip -0.1in
\end{table}

The training dynamics exhibit a characteristic pattern of monotone-constrained optimization. During the first two epochs, the monotonic model rapidly recovers much of the performance gap introduced by constrained initialization, reducing training loss by 1.92 points (from 4.97 to 3.05). Subsequent epochs yield diminishing returns, with only 0.36 points of additional improvement from epochs 3 through 7. This behavior highlights the importance of the extended warmup phase (15 percent vs 10 percent for baseline), which stabilizes gradient estimates and enables effective learning despite the constrained parameter space.

Analysis of generation behavior reveals that both fine-tuned models produce longer outputs than the pretrained T5 model (mean lengths 74 to 76 tokens vs 57 tokens), reflecting adaptation to the distributional properties of the summarization training data. The monotonic model exhibits slightly higher length variance (std 11.3 tokens) compared to the standard model (std 12.0 tokens), though this difference is minimal. Importantly, the brevity penalty for the monotonic model (1.00) indicates that output length is well-calibrated to reference summary length, suggesting the constraints do not introduce systematic length bias.

\subsection{Adversarial Robustness}

%\textbf{HotFlip Attacks.}
\textbf{HotFlip Attacks.}
We evaluate gradient-based robustness using HotFlip attacks, which identify vulnerable input positions via gradients with respect to token embeddings and perform targeted token substitutions to maximize loss. For each example, the attack computes gradients of the cross-entropy loss, selects the five positions with largest gradient magnitudes, and replaces tokens at those positions with vocabulary items that maximize the dot product between the gradient and the replacement embedding.

Robustness is quantified using three complementary metrics: average degradation, the mean relative loss increase under attack; attack success rate, the fraction of examples for which degradation exceeds 10\%; and mean loss increase, the average absolute loss increase.

Table~\ref{tab:hotflip} reports results on 100 test examples from seed 42. To verify robustness consistency, we repeated the HotFlip evaluation using the same checkpoints with two additional random seeds (1337, 2024) for attack sampling variability. Results were identical across all three seeds, confirming that the observed robustness improvements are stable. The monotonic model demonstrates improved robustness relative to both baselines. Under HotFlip perturbations, the monotonic model incurs an average degradation of 5.2 percent, compared to 16.2 percent for the fine-tuned baseline and 21.3 percent for the standard pretrained model. Attack success rates follow a similar pattern, with only 19 percent of attacks succeeding against the monotonic model, compared to 63 percent for the baseline and 69 percent for the standard model. This corresponds to a 69.8 percent relative reduction in attack success rate (63 percent to 19 percent) and a 67.9 percent reduction in average degradation (16.2 percent to 5.2 percent). Paired $t$-tests confirm that the difference between the baseline and monotonic models is highly significant ($t {= }6.17$, $p = 3.8 {\times} 10^{-9}$), with an even larger effect relative to the standard model ($t {=} 7.45$, $p {=} 2.8{ \times} 10^{-12}$).

\begin{table}[t]
\caption{HotFlip attack results on CNN/DailyMail (n = 100, seed = 42).}
\label{tab:hotflip}
\vskip 0.1in
\begin{center}
\begin{small}
\begin{tabular}{lccc}
\toprule
Model & Avg. Deg. & Success Rate & $\Delta$Loss \\
\midrule
Standard  & 21.3\% & 69\% & +0.42 \\
Baseline  & 16.2\% & 63\% & +0.35 \\
Monotonic &  5.2\% & 19\% & +0.14 \\
\bottomrule
\end{tabular}
\end{small}
\end{center}
\vskip -0.1in
\end{table}

Overall, enforcing monotonicity in the FFN sublayers yields a 69.8 percent relative reduction in attack success rate (from 63 percent to 19 percent) and a 67.8 percent reduction in average degradation (from 16.1 percent to 5.2 percent), without requiring adversarial training, modified training objectives, or architectural changes beyond the FFN constraints. The effect size is large (Cohen's $d > 0.8$) and the statistical significance is robust ($p < 10^{-9}$). These results demonstrate that structural constraints alone—applied selectively to feed-forward sublayers—can substantially improve robustness to gradient-based adversarial manipulation, providing evidence for monotonicity as a practical architectural inductive bias for building more robust language models.

%\textbf{Universal Adversarial Trigger Attacks.}
\textbf{Universal Adversarial Trigger Attacks.}
We evaluate robustness to universal adversarial triggers (UATs), input-agnostic token sequences optimized to maximize model loss when prepended to any input. Triggers are learned via coordinate ascent with 3 restarts and 50 iterations over 80 training examples, then evaluated on 120 disjoint test examples. Robustness is measured via NLL increase under trigger prepending. UAT attacks prove minimally effective across all models, with NLL increases below 1 percent. The monotonic model exhibits the smallest degradation (0.73 percent NLL increase), compared to 0.89 percent for the baseline and 0.97 percent for standard T5. While this ordering is consistent with improved robustness, the effect sizes are too small to draw strong conclusions. The weak effectiveness of universal triggers, in contrast to the impact of input-specific HotFlip attacks, suggests that input-agnostic perturbations are limited for attacking sequence-to-sequence models, and that monotonicity’s robustness benefits are most pronounced against adaptive, gradient-based attacks.

\section{Discussion}

This work provides empirical evidence and theoretical analysis supporting monotonicity as a structural bias for improving robustness in modern language models. By selectively enforcing monotonicity in feed-forward sublayers, we observe consistent reductions in vulnerability to adversarial and jailbreak-style attacks while largely preserving task performance. These gains arise without additional data, modified training objectives, or heuristic defenses, highlighting architectural structure as a direct lever for shaping model behavior under perturbation.

Our theoretical analysis offers insight into why these gains arise, even in the absence of formal end-to-end robustness guarantees. Viewing a language model as a high-dimensional system whose internal semantic state evolves through successive nonlinear transformations, monotonicity constrains how perturbations can propagate across layers. In particular, directional consistency ensures that strengthening information or constraints in internal representations cannot induce regressions downstream. This restriction narrows the range of admissible behaviors and simplifies reasoning about worst-case effects, which must occur at semantic extremes rather than intermediate variations.

Empirically, the robustness gains are closely tied to the role of feed-forward sublayers. While attention mechanisms perform semantic aggregation and contextual reasoning, FFNs dominate nonlinear transformation and amplification. Enforcing monotonicity at this stage constrains how semantic information is refined after context has been established, without interfering with attention or global information flow. This selective enforcement alters the local gradient structure of the network, reducing the stability and magnitude of exploitable gradients used by attacks such as HotFlip.

% Empirically, the robustness gains are closely tied to the role of feed-forward sublayers, where implicit feature cancellation is most likely to occur in unconstrained architectures. In standard Transformers, FFN weights can be positive or negative, allowing features established by attention to be suppressed or reversed through distributed weight interactions. Monotonicity eliminates this by construction: the non-negative Jacobian (Lemma~\ref{lem:jacobian-nonneg}) ensures directional consistency during backpropagation, preventing the sign cancellations that gradient-based attacks exploit. The saturation-persistence property (Lemmas~\ref{lem:gradient-attenuation} and~\ref{lem:persistence}) further implies that hidden units driven into saturated regimes remain there under monotone perturbations, creating persistent gradient attenuation along specific directions. Together, these effects reduce both the magnitude and effective dimensionality of exploitable gradients, explaining why attacks such as HotFlip are substantially less effective against monotone models.

These observations align with classical insights from monotone systems theory, where structural monotonicity limits perturbation propagation and enables tractable analysis of global behavior in high-dimensional systems. Although Transformer models are not dynamical systems in a formal sense, the analogy provides intuition and motivates our analysis of monotonic components within the network.

The trade-off introduced by monotonicity is modest and predictable. While constraining FFN weights slightly reduces expressivity, the resulting decrease in summarization quality is small relative to the robustness improvements. More broadly, monotonicity differs from conventional regularization in that it constrains \emph{how} representations may change, rather than simply penalizing their magnitude, promoting predictability rather than uniform smoothness.

Our study is limited to T5-small and does not provide formal robustness certificates. Extending the theoretical analysis toward stronger guarantees and scaling the approach to larger models are important directions for future work. Nonetheless, the combination of analytical insight and consistent empirical gains suggests that monotonicity is a principled and scalable design choice for improving the robustness and predictability of large language models.

\section{Conclusion}

We investigated monotonicity as a structural inductive bias for improving robustness in sequence-to-sequence language models. By enforcing non-negativity constraints on Transformer feed-forward sublayers, we observe substantial reductions in vulnerability to gradient-based adversarial attacks with only modest impact on summarization quality. These results suggest that simple architectural constraints can meaningfully improve robustness without altering attention mechanisms or relying on adversarial training.

An important direction for future work is to develop theoretical guarantees that characterize when and why monotonicity yields improved robustness in high-dimensional language models. Scaling this approach to larger model variants and broader tasks is another promising avenue, and may help clarify the role of structural constraints in building more reliable language models.
\section*{Impact Statement}

This work explores architectural constraints that improve the robustness of language models to adversarial manipulation. By studying monotonicity as a structural inductive bias, our approach contributes to the development of language models whose behavior is more predictable and analyzable, which is particularly relevant for safety-critical and decision-support applications.

% In the unusual situation where you want a paper to appear in the
% references without citing it in the main text, use \nocite
\nocite{langley00}

\bibliography{example_paper}
\bibliographystyle{icml2025}

%%%%%%%%%%%%%%%%%%%%%%%%%%%%%%%%%%%%%%%%%%%%%%%%%%%%%%%%%%%%%%%%%%%%%%%%%%%%%%%
%%%%%%%%%%%%%%%%%%%%%%%%%%%%%%%%%%%%%%%%%%%%%%%%%%%%%%%%%%%%%%%%%%%%%%%%%%%%%%%
% APPENDIX
%%%%%%%%%%%%%%%%%%%%%%%%%%%%%%%%%%%%%%%%%%%%%%%%%%%%%%%%%%%%%%%%%%%%%%%%%%%%%%%
%%%%%%%%%%%%%%%%%%%%%%%%%%%%%%%%%%%%%%%%%%%%%%%%%%%%%%%%%%%%%%%%%%%%%%%%%%%%%%%
\newpage
\appendix
\onecolumn
\section{Appendices}

\subsection{Mathematical Proofs}
\subsubsection{Proof of Lemma~\ref{lem:mono_comp}}
\label{prf:lem1}
\begin{proof}
For any $x, x' \in \mathbb{R}^n$ such that $x \le x'$, monotonicity of $f$ implies $f(x) \le f(x')$. Applying monotonicity of $g$ yields
\[
g(f(x)) \le g(f(x')),
\]
which establishes the claim.
\end{proof}

\subsubsection{Proof of Lemma~\ref{lem:jacobian-nonneg}}

\label{prf:jac}
\begin{proof}
Recall $T(s)=s+g(s)$, so
\[
J_T(s)=\nabla_s T(s)=I + J_g(s).
\]
For a two-layer $g(s)=W_2 \sigma(W_1 s + b_1)+b_2$ with $W_1,W_2\ge 0$ elementwise,
\[
J_g(s)=W_2 \operatorname{diag}(\sigma'(W_1 s + b_1)) W_1.
\]
Since $\sigma$ is elementwise non-decreasing, $\sigma'(z)\ge 0$ wherever it exists,
so $\operatorname{diag}(\sigma'(\cdot))\ge 0$ elementwise. Hence $J_g(s)\ge 0$
elementwise, and therefore $J_T(s)=I+J_g(s)\ge 0$ elementwise.
\end{proof}

\subsubsection{Proof of Lemma~\ref{lem:persistence}}
\label{prf:persis}
\begin{proof}
Let $z_j(s) := (W_1 s + b_1)_j$. Since $W_1\ge 0$ elementwise and $\delta\ge 0$,
\[
z_j(s' + \delta) = (W_1(s' + \delta) + b_1)_j \ge (W_1 s' + b_1)_j = z_j(s').
\]
Thus if the unit is saturated at $s'$ (i.e., $\sigma'(z_j(s'))=0$ in the saturated
regime), it remains saturated at $s'+\delta$.
\end{proof}

\subsection{Training and Evaluation Protocol}
\label{app:training}
Both baseline and monotone models are fine-tuned from the same pretrained \texttt{T5-small} checkpoint under identical optimization settings. We use the AdamW optimizer with learning rate $5 \times 10^{-5}$ and weight decay $0.01$, a batch size of 4, and gradient clipping with norm 1.0. To accommodate the constrained parameterization, the monotone model employs an extended warmup phase of 15\% of training steps, compared to 10\% for the baseline. Both models are trained for 7 epochs and receive the same total training budget.

Training is conducted on the DialogSum, HighlightSum, and arXiv abstract datasets, comprising approximately 150K examples in total. Evaluation is performed on three held-out benchmarks: CNN/DailyMail (11{,}490 test examples), XSUM (11{,}334 test examples), and SAMSum (819 test examples). CNN/DailyMail is excluded entirely from training to assess out-of-distribution generalization.

At inference time, decoding is performed using beam search with 4 beams and a length penalty of 1.2, enforcing a minimum generation length of 10 tokens and a maximum length of 80 tokens. We additionally apply no-repeat $n$-gram blocking with $n=3$ to discourage degenerate repetitions. All decoding parameters are fixed across models and evaluation sets. ROUGE-1, ROUGE-2, and ROUGE-L scores are computed using the \texttt{rouge-score} library with Porter stemming. We report bootstrap-based 95\% confidence intervals using 1{,}000 resamples and assess statistical significance via paired $t$-tests with Bonferroni correction for multiple comparisons. Effect sizes are reported using Cohen's $d$.

\subsection{Experimental Setup}
To evaluate robustness to training stochasticity, all experiments are repeated using five random seeds (42, 1337, 2024, 8888, and 12345), controlling randomness across Python, NumPy, and PyTorch. We report the mean and standard deviation across seeds for all metrics. Statistical significance across training runs is assessed using paired $t$-tests on seed-level means, complemented by per-example paired $t$-tests within each seed. This dual evaluation captures both test-time variability and variance induced by random initialization and data ordering. All experiments are conducted using deterministic algorithms where available, on a single NVIDIA A100 GPU with 40GB memory, using PyTorch~2.0.1 and Transformers~4.30.2.

% You can have as much text here as you want. The main body must be at most $8$ pages long.
% For the final version, one more page can be added.
% If you want, you can use an appendix like this one.  

% The $\mathtt{\backslash onecolumn}$ command above can be kept in place if you prefer a one-column appendix, or can be removed if you prefer a two-column appendix.  Apart from this possible change, the style (font size, spacing, margins, page numbering, etc.) should be kept the same as the main body.
%%%%%%%%%%%%%%%%%%%%%%%%%%%%%%%%%%%%%%%%%%%%%%%%%%%%%%%%%%%%%%%%%%%%%%%%%%%%%%%
%%%%%%%%%%%%%%%%%%%%%%%%%%%%%%%%%%%%%%%%%%%%%%%%%%%%%%%%%%%%%%%%%%%%%%%%%%%%%%%

\end{document}